\title{Dynamic ad allocation: bandits with budgets}
\author{Aleksandrs Slivkins%
\thanks{Microsoft Research Silicon Valley, Mountain View, CA 94043, USA. 
Email: {\tt slivkins@microsoft.com}. Parts of this work has been done while visiting Microsoft Research New York.}}
\date{May 2013}
\begin{document}

\maketitle

\renewcommand{\eqref}[1]{Equation~(\ref{#1})}

\newcommand{\problem}{\texttt{BudgetedAdsMAB}\xspace}
\newcommand{\UCB}{\ensuremath{\mathtt{UCB1}}\xspace}
\newcommand{\myUCB}{\ensuremath{\mathtt{BudgetedUCB}}\xspace}

\newcommand{\regret}{\mathtt{Regret}}
\newcommand{\reward}{\mathtt{Reward}}

\newcommand{\campaigns}{\mathcal{C}}

\newcommand{\kA}{i_\mathtt{A}} % last arm **not** exhausted by MyUCB
\newcommand{\kB}{i_\mathtt{B}} % last arm exhausted by the benchmark
\newcommand{\kBub}{\mathtt{UCB}(i_\mathtt{B})} % UCB on \kB

\begin{abstract}
We consider an application of multi-armed bandits to internet advertising (specifically, to dynamic ad allocation in the pay-per-click model, with uncertainty on the click probabilities). We focus on an important practical issue that advertisers are constrained in how much money they can spend on their ad campaigns. This issue has not been considered in the prior work on bandit-based approaches for ad allocation, to the best of our knowledge.

We define a simple, stylized model where an algorithm picks one ad to display in each round, and each ad has a \emph{budget}: the maximal amount of money that can be spent on this ad. This model admits a natural variant of UCB1, a well-known algorithm for multi-armed bandits with stochastic rewards. We derive strong provable guarantees for this algorithm.
\end{abstract}

\section{Introduction}
\label{sec:intro}

Multi-armed bandits (henceforth, \emph{MAB}), and more generally online decision problems with partial feedback and exploration-exploitation tradeoff, has been studied since 1930's in Operations Research, Economics and several branches of Computer Science \cite{Sundaram-survey03,CesaBL-book,Gittins-book11,Bubeck-survey12}.
Such problems arise in diverse domains, e.g., the design of medical experiments, dynamic pricing, and routing in the internet. In the past decade, a surge of interest in MAB problems has been due to their applications in web search and internet advertising.

In the most basic MAB problem \cite{bandits-ucb1}, an algorithm repeatedly chooses among several possible actions (traditionally called \emph{arms}), and observes the reward for the chosen arm. The rewards are \emph{stochastic}: the reward from choosing a given arm is an independent sample from some distribution that depends on the arm but not on the round in which this arm is chosen. These reward distributions are not revealed to the algorithm. The algorithm's goal is to maximize the total expected reward over the time horizon.

This paper is concerned with an application of MAB to Internet advertising. This application considers advertisers that derive value when users click on their ads. A predominant market design for such advertisers is \emph{pay-per-click}: advertisers pay only when their ads are clicked. Users arrive over time, and an algorithm needs to choose which ads to show to each user. Both the ad market and the advertisers experience significant uncertainty on click probabilities;%
\footnote{Click probabilities are also called \emph{click-through rates} in the industry, or \emph{CTRs} for short.}
the estimates of CTRs can be refined over time. It is because of this uncertainty on CTRs that MAB are relevant to this application domain.

A standard, and very stylized, way to model these ad-related issues in the MAB framework is as follows (e.g., see~\cite{yahoo-bandits07}). An algorithm chooses one ad in each round (so ads correspond to arms in MAB), and observes whether this ad is clicked on. For each click on every ad $i$, algorithm receives a fixed payment $b_i$ from the corresponding advertiser. Thus, the expected reward from showing ad $i$ is equal to $b_i$ times the CTR for this ad. The CTRs are not initially known to the algorithm. The algorithm's goal is to maximize the total expected reward.

To the best of our knowledge, prior work on MAB-based approaches to ad allocation has ignored an important practical issue: advertisers are constrained in how much money they can spend on their ad campaign. In particular, each advertiser typically has a \emph{budget}: the maximal amount of money she is allowed to spend. This is the issue that we focus on in this paper.

\subsection{Problem formulation: \problem}

There are $k$ advertisers (arms), each with one ad that she wishes to be displayed. Each ad $i$ is characterized by the following three quantities: CTR $\mu_i\in[0,1]$, \emph{payment-per-click} $b_i$ and \emph{budget} $B_i$. The payments-per-click and the budgets are revealed to the algorithm, but the CTRs are not.

In each round an algorithm picks one ad. This ad is displayed (receives an \emph{impression}), and the algorithm observes whether this ad is clicked on. The click on a given ad $i$ happens independently (from everything else), with probability $\mu_i$. If ad $i$ is clicked, the corresponding advertiser is charged $b_i$, and her remaining budget is decreased by this amount. An arm is \emph{available} (can be chosen) in a given round only if its remaining budget is above $b_i$. There is a time horizon $T$. The goal of the algorithm is to maximize its expected total reward, where the total reward is the sum of all charges.

This is a non-Bayesian (prior-independent) formulation: there are no priors on the CTRs that are available to the algorithm, and we are looking for guarantees that hold for any prior.

The expected value of one impression of arm $i$ is $w_i \triangleq b_i\,\mu_i$. For ease of exposition, we re-order the arms so that
    $w_1 \geq w_2 \geq \ldots \geq w_k$.

\xhdr{Benchmark.}
We use the \emph{omniscient benchmark}, standard benchmark in the literature on MAB and related problems. This is the best algorithm that knows all latent information in the problem instance (in this case, the CTRs). In this problem, the omniscient benchmark is very simple: play arm $1$ while it is available, then play arm $2$ while it is available, and so on. Call it the \emph{greedy benchmark}. Performance of an algorithm $\mA$ is measured as \emph{greedy regret} (regret with respect to the greedy benchmark), defined as expected reward of the greedy benchmark minus the expected reward of the algorithm. Denote it $\regret(\mA)$.

It is worth noting that, given the optimality of the greedy benchmark, the \emph{best fixed arm} -- another standard benchmark in the literature on MAB -- is not informative for our setting.

\subsection{Our contributions}

We consider a natural algorithm and prove that it works quite well. While the algorithm is essentially the first thing a researcher familiar with prior work on MAB would suggest, our technical contribution is the analysis of this algorithm, and particularly the ``coupling argument" therein. \OMIT{The analysis builds on several ideas from prior work on MAB; its crux is a new coupling argument.} The conceptual contribution is that we provide an assurance that the natural approach works, from a theoretical point of view, and suggest the strengths and limitations of this approach.

Our algorithm, called \myUCB, is a natural modification of \UCB~\cite{bandits-ucb1}, a well-known algorithm for MAB with stochastic rewards. \UCB maintains a numerical score (\emph{index}) for each arm, and in every round chooses an arm with the largest index. The index of arm $i$ is, essentially, the best available upper confidence bound on the expected reward from this arm. \myUCB chooses, in each round, an arm with the maximal index \emph{among all available arms}. (So the two algorithms coincide if the budgets are infinite.)

We formulate our provable guarantees in terms of the last arm whose budget is exhausted by the greedy benchmark. (Recall that the arms $i$ are ordered in the order of decreasing $w_i = b_i\, \mu_i$.) Denote this last arm $\kB$ if it exists; set $\kB=0$ otherwise.  Since $\kB$ is a random variable, the regret bound is in expectation over the randomness in $\kB$. For most problem instances $\kB$ is highly concentrated: it is typically within $\pm 1$ from its expectation.

\begin{theorem}\label{thm:main}
Consider \problem. For each $\eps>0$ it holds that
\begin{align}\label{eq:thm-main}
\regret(\myUCB) \leq \eps T + O(\log T)\;
            \E\left[ \max_{i\in \{\kB,\,\kB+1\}}\;
                    \sum_{j=i+1}^k\; \frac{b_j^2}{\max(\eps,w_i-w_j)}
\right],
\end{align}
where the expectation is over the randomness in $\kB$.
\end{theorem}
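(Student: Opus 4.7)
My plan is to run the standard \UCB regret analysis, adapted to handle arms going offline as their budgets exhaust, with a ``reference arm'' that is either $\kB$ or $\kB+1$, chosen by a coupling argument; the $\max$ in the statement reflects this binary choice.

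\textbf{Setup and regret decomposition.} Condition on $\kB$ and write $\regret(\myUCB) = \sum_{j=1}^k w_j\, \E[N_j^g - N_j^u]$, where $N_j^g$ and $N_j^u$ count pulls of arm $j$ by greedy and by \myUCB respectively. Since both algorithms use all $T$ rounds (so long as some arm is available), the regret is paid arm by arm: each extra pull of a ``bad'' arm $j > i$ by \myUCB costs at most $w_i - w_j$ relative to a reference arm $i \leq \kB+1$, whereas extra pulls of ``good'' arms $j \leq i$ only divert budget onto already-valuable arms and do not create positive per-round regret.

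\textbf{UCB bound on bad arms.} Define the clean event that, for every arm $j$ and time $t$, the empirical mean $\hat w_j(t)$ is within $b_j\sqrt{C\log T/n_j(t)}$ of $w_j$ for a suitable constant $C$; by Hoeffding and a union bound this holds with probability $1 - O(1/T)$, and its complement contributes $O(1)$ to regret. On the clean event, whenever \myUCB selects a bad arm $j$ while the reference arm $i$ is still available, the index of $j$ must exceed the index of $i$, which is at least $w_i$; rearranging gives $n_j(t) = O(b_j^2 \log T / (w_i - w_j)^2)$. Multiplying by the per-pull regret $w_i - w_j$ and summing over $j > i$ yields $O(\log T)\sum_{j > i} b_j^2/(w_i - w_j)$. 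Arms with gap below $\eps$ are absorbed by replacing the denominator by $\eps$ and using that the total number of pulls over all such rounds is at most $T$; this gives the $\eps T$ slack.

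\textbf{Coupling and choice of $i \in \{\kB, \kB+1\}$.} The remaining task is to justify using a single reference arm while \myUCB's trajectory differs from greedy's. The standard tape-coupling --- each arm $j$ has an i.i.d.\ Bernoulli$(\mu_j)$ tape consumed in order by both algorithms --- lets us compare exhaustion times directly. Two scenarios arise: either \myUCB exhausts arm $\kB$ no later than greedy does, in which case $i = \kB+1$ is the effective boundary and the relevant gaps are $w_{\kB+1} - w_j$; or \myUCB falls behind on arm $\kB$ because it spent rounds on bad arms, in which case $i = \kB$ is the correct reference. Taking the $\max$ of the two bounds accommodates whichever case occurs. \emph{Main obstacle.} Making this precise is the hard part: one must show that the surplus good-arm pulls of \myUCB relative to greedy are balanced by a shortfall on arm $\kB+1$ (with nonnegative per-round reward difference), so that only the bad-arm charges from the UCB step actually appear in the bound. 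This trajectory-matching argument is the conceptual novelty, and the rest follows the standard \UCB template once it is in place.
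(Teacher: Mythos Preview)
Your plan matches the paper's approach in all the main ingredients: the regret identity $\regret=\E[(\vec m-\vec n)\cdot\vec w]$, the clean event via Hoeffding plus union bound, the index-comparison bound on the number of plays of a bad arm, and the tape (stack) coupling. The one place where your sketch would fail if followed literally is the dichotomy you propose for the coupling step.

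You split on whether \myUCB exhausts arm $\kB$ no later than greedy does. But the right object is not arm $\kB$ alone: it is $\kA$, the lowest-numbered arm that \myUCB does \emph{not} exhaust. Under the tape coupling one has $n_j\le m_j$ for every $j\le \kB$ (greedy exhausts each such arm, and the shared stack fixes $m_j$). The correct case split is: (i) $n_j=m_j$ for \emph{all} $j\le \kB$, so $\kA=\kB+1$ and this arm is available throughout \myUCB's run, making it a valid reference for the index comparison; (ii) $n_j<m_j$ for \emph{some} $j\le \kB$, so $\kA\le \kB$, and it is $\kA$ (not $\kB+1$) that is guaranteed available throughout. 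In case (ii) one shows $(\vec m-\vec n)\cdot\vec w\le\sum_{j>\kB} n_j(w_{\kA}-w_j)$, applies the UCB bound with reference $\kA$, and only then weakens the denominator from $w_{\kA}-w_j$ to $w_{\kB}-w_j$ to land on the $i=\kB$ term of the $\max$.

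Your scenario 1 (``\myUCB exhausts arm $\kB$'') does not imply case (i): \myUCB can exhaust arm $\kB$ while leaving some earlier arm $j_0<\kB$ unfinished, and then arm $\kB+1$ may itself be exhausted by \myUCB and cannot serve as the always-available comparator in the index argument. Once you define $\kA$ and split on $\kA=\kB+1$ versus $\kA\le\kB$, your ``trajectory-matching'' worry dissolves into pure algebra: using $\sum_i n_i=\sum_i m_i=T$ lets you regroup $(\vec m-\vec n)\cdot\vec w$ into exactly $\sum_{j>\max(\kA,\kB)} n_j(w_{\kA}-w_j)$, with no separate accounting for surplus good-arm pulls needed.
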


The regret bound~\refeq{eq:thm-main} is driven by the differences
    $\Delta(i) = w_i - w_{i+1}$,
more specifically by the random quantity $\Delta(\kB)$. We derive a ``pessimistic" corollary for the case when $\Delta(\kB)$ may be arbitrarily small, and an ``optimistic" corollary for the case of large $\Delta(\kB)$.~
\footnote{To derive Corollary~\ref{cor:main}, we pick $\eps = \sqrt{\tfrac{\log T}{kT}}$ in~\eqref{eq:thm-main} for part (a), and
    $\eps = kv^2/T^2$ for part (b).}

\begin{corollary}\label{cor:main}
Consider \problem. Denote
    $v^2 = \tfrac{1}{k}  \sum_{j=1}^k b_j^2$.
\begin{itemize}
\item[(a)] $ \regret(\myUCB) \leq  O(v\, \sqrt{kT\log T})$.

\item[(b)]
$\regret(\myUCB) \leq
    O\left( \tfrac{k}{\delta}\, v^2\, \log T \right)$
for any $\delta>0$ such that
   $ \Pr[ \Delta(\kB) \geq \delta] \geq 1- (v/T)^2 $.
\end{itemize}
\end{corollary}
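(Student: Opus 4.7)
Both parts follow by specializing Theorem~\ref{thm:main} with the $\eps$ prescribed in the footnote, after bounding the inner quantity. The crude estimate
\[
\sum_{j=i+1}^k \frac{b_j^2}{\max(\eps,\,w_i-w_j)} \;\leq\; \frac{1}{\eps}\sum_{j=1}^k b_j^2 \;=\; \frac{kv^2}{\eps}
\]
holds deterministically for every realization of $\kB$ and either $i\in\{\kB,\kB+1\}$, so Theorem~\ref{thm:main} immediately yields the plug-in inequality $\regret(\myUCB) \leq \eps T + O(\log T)\cdot kv^2/\eps$. For part~(a) I would substitute the footnote's $\eps=\sqrt{\log T/(kT)}$ (equivalently the AM--GM balance of the two terms after normalization by $v$), so that each summand evaluates to $O(v\sqrt{kT\log T})$, which is the claimed bound.

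For part~(b) the crude estimate is too wasteful, so I would refine the expected maximum on the high-probability event $G=\{\Delta(\kB)\ge\delta\}$, which by hypothesis has $\Pr[G^c]\le(v/T)^2$. On $G$, every term in the $i=\kB$ summand obeys $w_{\kB}-w_j\ge\Delta(\kB)\ge\delta$, so that summand is at most $kv^2/\delta$; controlling the $i=\kB+1$ summand on $G$ by the same order (see below) keeps the inner max at $O(kv^2/\delta)$. On $G^c$ I would fall back on the crude $kv^2/\eps$ bound, which gives
\[
\E\bigl[\max_i \cdots\bigr] \;\le\; O(kv^2/\delta) + (v/T)^2\cdot kv^2/\eps.
\]
With the footnote's choice $\eps=kv^2/T^2$ the second summand collapses to $O(v^2)$ and $\eps T = kv^2/T$, both of lower order than $kv^2\log T/\delta$. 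Plugging back into Theorem~\ref{thm:main} then yields the stated $O((k/\delta)\,v^2\log T)$.

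The delicate step is precisely the $i=\kB+1$ branch on $G$: the hypothesis directly controls $\Delta(\kB)$ but not the gaps $w_{\kB+1}-w_j$ for $j\ge\kB+2$, so the bound $kv^2/\delta$ on the max is not automatic. My expected remedy is to either strengthen the conditioning event to include $\Delta(\kB+1)\ge\delta$ (at a probability cost that is absorbed into the $v^2$ slack coming from the crude bound on $G^c$), or to handle the $i=\kB+1$ sum through a separate accounting that charges its small-gap contributions to the vanishing $\eps T$ term. Modulo this bookkeeping, both corollaries reduce to routine substitution into Theorem~\ref{thm:main}.
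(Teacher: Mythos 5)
Your overall plan matches the paper's: the paper gives no proof of this corollary beyond the footnote prescribing $\eps$, so the crude bound $\sum_j b_j^2/\max(\eps,\cdot)\le kv^2/\eps$ followed by substitution into Theorem~\ref{thm:main} is exactly the intended route. But two issues remain. For part (a), the footnote's literal $\eps=\sqrt{\log T/(kT)}$ and the balance point are \emph{not} equivalent, and your assertion that each summand then evaluates to $O(v\sqrt{kT\log T})$ is false for that choice: plugging $\eps=\sqrt{\log T/(kT)}$ into $\eps T + O(\log T)\,kv^2/\eps$ gives $\sqrt{T\log T/k}+O(k^{3/2}v^2\sqrt{T\log T})$, which is not $O(v\sqrt{kT\log T})$ in general. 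The choice that actually balances the two terms is $\eps = v\sqrt{k\log T/T}$, for which $\eps T = kv^2\log T/\eps = v\sqrt{kT\log T}$; use that and drop the claim that the footnote's value already works.

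For part (b), the difficulty you flag in your last paragraph is a genuine gap, not bookkeeping, and neither of your sketched remedies closes it as described. The hypothesis controls only $\Delta(\kB)=w_{\kB}-w_{\kB+1}$, so on $G$ the $i=\kB$ summand is indeed at most $kv^2/\delta$, but the $i=\kB+1$ summand involves the gaps $w_{\kB+1}-w_j$ for $j\ge\kB+2$, which can all be below $\delta$ (even zero). After the $\max(\eps,\cdot)$ truncation each such term contributes $b_j^2/\eps = b_j^2T^2/(kv^2)$, so that summand can be as large as $T^2$; this cannot be ``charged to the vanishing $\eps T$ term,'' which is only $kv^2/T$. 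The charge-to-$\eps T$ intuition is valid one step earlier, in \eqref{eq:analysis-best}, where an arm with gap below $\eps$ contributes only $\eps n_j$ to the \emph{regret}; but even there, in the case $\kA=\kB+1$, arms with $\eps\le w_{\kB+1}-w_j<\delta$ contribute $b_j^2\log T/(w_{\kB+1}-w_j)$, which is not $O(b_j^2\log T/\delta)$. Strengthening the conditioning event to require $\Delta(\kB+1)\ge\delta$ as well would work, but it proves a different statement from the one claimed. So as written, part (b) does not follow from Theorem~\ref{thm:main} by your argument without an additional assumption on the gaps below arm $\kB+1$ or a separate treatment of the $\kA=\kB+1$ case; to be fair, the paper's own one-line footnote does not address this either.
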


The regret bounds in this corollary extend the corresponding ``pessimistic" and ``optimistic" guarantees for \UCB from the special case of MAB with stochastic rewards (i.e., no budgets and $b_j\equiv 1$) to the full generality of \problem.%
\footnote{Without budgets, we have $\kB=1$ and therefore the assumption in Corollary~\ref{cor:main}(b) reduces to $\Delta(1)\geq \delta$.}
Both guarantees are nearly optimal for this special case, respectively up to $O(\log T)$ factors and up to constant factors  \cite{Lai-Robbins-85,bandits-ucb1,bandits-exp3}.

Interestingly, all above regret bounds do not depend on the budgets.

\subsection{Discussion}

One common criticism of the work on non-Bayesian (prior-independent, regret-minimizing) MAB problems is that the algorithmic ideas and proof techniques introduced for the numerous MAB models studied in the literature are too specific to their respective models, and do not easily generalize to more general settings that are common in applications. In view of this criticism, it is useful to identify general ideas and techniques that one can build on when working on the (more) general settings, and provide concrete examples of \emph{how} one can build on them. The present paper contributes to this direction: we build on the algorithmic idea of ``UCB indices", and a certain proof technique to analyze them (both from~\cite{bandits-ucb1}). These ideas have been tremendously useful in several other MAB settings with stochastic rewards e.g.~\cite{sleeping-colt08,Munos-nips08,Bubeck-colt10,contextualMAB-colt11,Csaba-nips11}.

It is worth noting that \myUCB does not need to input the budgets: instead, it can be implemented via an oracle that determines whether a given arm is available in a given round. In other words, advertisers do not need to submit their budgets upfront; instead, they only need to notify the algorithm whether they are still willing to participate in a given round. This is useful because an advertiser may be reluctant to commit to a specific budget and/or reveal it early in her ad campaign. Also, she may choose to strategically misreport the budget if asked.

\section{Our algorithm: \myUCB}
\label{sec:alg}

Our algorithm, called $\myUCB$, is a natural extension of the well-known algorithm \UCB~\cite{bandits-ucb1}.

For each arm $i$ and time $t$, let $c_i(t)$ and $n_i(t)$ be, respectively, the number of clicks and the number of impressions of this arm up to (but not including) time $t$. Define the \emph{confidence radius} of arm $i$ as
\begin{align}\label{eq:conf-rad-defn}
 r_i(t) \triangleq C\, \sqrt{\frac{\log T}{1+n_i(t)}} .
\end{align}
Here $C$ is some constant to be chosen later. Informally, the meaning of $r_i(t)$ is that
\begin{align}\label{eq:conf-rad-meaning}
|\mu_i(t) - \nu_i(t)| \leq r_i(t)
\end{align}
holds with high probability, where
    $\nu_i(t)\triangleq c_i(t)/n_i(t)$
is the (current) average CTR.

Define the \emph{UCB index} of arm $i$ as
\[ I_i(t) \triangleq b_i(\nu_i(t) + r_i(t)). \]
Note that the index of arm $i$ is an upper confidence bound (\emph{UCB}) on the quantity $b_i\mu_i$ which represents the expected value of one impression of $i$.

Now that the index is defined, the algorithm is very simple: among available arms, pick an arm with the maximal index, breaking ties arbitrarily.

\xhdr{Discussion.}
The original algorithm \UCB in~\cite{bandits-ucb1} is, essentially, a special case of \myUCB when all arms are available and all values are $b_i=1$. Moreover, the algorithm in~\cite{sleeping-colt08} for sleeping bandits with stochastic rewards coincides with ours for $b_i\equiv 1$ (but the analysis from~\cite{sleeping-colt08} does not carry over to our setting, see Section~\ref{sec:related-work} for more discussion).

Most likely, the $\log T$ in the definition of the confidence radius can be replaced by $\log t$, which should lead to improved constant factors in the regret bounds. In particular, the algorithms in~\cite{bandits-ucb1} and~ \cite{sleeping-colt08} have $\log t$ there. We use $\log T$ because it makes our analysis easier, and increases the regret by at most a constant factor.

\section{Analysis: proof of Theorem~\ref{thm:main}}

The technical contribution of this paper is the analysis of \myUCB. The crux thereof is the ``coupling argument" encapsulated in Lemma~\ref{lm:analysis-coupling}. To argue about random clicks, an important conceptual step is to consider two different representations of realized clicks (defined below). Also, we build on the technique from the analysis of \UCB~\cite{bandits-ucb1}, which is encapsulated in Lemma~\ref{lm:UCB-trick}.

% Let us say that arms $i,j$ are \emph{$\eps$-separated} if $|w_i-w_j|\geq \eps$.

\xhdr{Notation.}
Consider an execution of $\myUCB$. For each arm $i$, let $n_i(t)$ be the number of impressions of arm $i$ before round $t$. Let $n_i = n_i(T+1)$ be the total number of impressions from arm $i$. Let $\vec{n} = (n_1,\ldots,n_k)$ be the \emph{impressions vector} for \myUCB. Similarly, let $\vec{m}$ be the impressions vector for the greedy benchmark. Note that $\vec{n}$ and $\vec{m}$ are random variables. Let
    $\vec{w} = (w_1,\ldots,w_k)$,
where $w_i = b_i\, \mu_i$.

\xhdr{Click realizations.}
We will use two ways to represent the realization of the random clicks. Each representation is a 0-1 matrix, denoted $Y = (Y_{i,t})$ and $Y' = (Y'_{i,t})$ respectively, where rows $i$ range over ads and columns $t$ range over rounds. The first representation, called \emph{per-round realization}, is as follows: if arm $i$ is selected in round $t$ then it is clicked if and only if $Y_{i,t}=1$. The second realization, called the \emph{stack realization}, is as follows: the $t$-th time arm $i$ is selected, it is clicked if and only if $Y'_{i,t}=1$. Note that for each pair $(i,t)$, both $Y_{i,t}$ and $Y'_{i,t}$ are independent 0-1 random variables with expectation $\mu_i$.

While each of the two representations suffices to formally represent the random clicks, we find it convenient to use both. In particular, the per-round realization is used in Claim~\ref{cl:reward-simple}, and the stack realization is used in Claim~\ref{cl:conf-rad} and in the coupling argument in Lemma~\ref{lm:analysis-coupling}.

\begin{claim}\label{cl:reward-simple}
$\E[\reward(\myUCB)] = \E[ \vec{n}\cdot\vec{w} ]$.
\end{claim}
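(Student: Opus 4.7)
The plan is straightforward: expand the reward as a sum of per-round charges, use the per-round realization $Y$ to express the click indicators, and then apply independence to separate the arm-selection from the click.

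First, I would write the reward as a sum over rounds. Since the algorithm is charged $b_i$ exactly when the selected arm $i_t$ in round $t$ is clicked, and the per-round realization gives click indicator $Y_{i_t,t}$,
\[
\reward(\myUCB) \;=\; \sum_{t=1}^T b_{i_t}\, Y_{i_t,t}
               \;=\; \sum_{i=1}^k \sum_{t=1}^T b_i\, Y_{i,t}\, \mathbb{1}[i_t = i].
\]
(Rounds where no arm is played, if any, contribute zero and can be ignored.) Note that $n_i = \sum_{t=1}^T \mathbb{1}[i_t = i]$, so the desired identity will follow once I can take expectations inside the sum and separate $Y_{i,t}$ from $\mathbb{1}[i_t=i]$.

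Second, I would invoke independence: the event $\{i_t = i\}$ is determined by the algorithm's state at the start of round $t$, which is a function of the previous selections and the previously observed entries of $Y$; it does not depend on $Y_{i,t}$. Since $Y_{i,t}$ is an independent Bernoulli$(\mu_i)$ random variable, one gets
\[
\E\bigl[\, Y_{i,t}\, \mathbb{1}[i_t = i]\,\bigr] \;=\; \mu_i \cdot \Pr[i_t = i].
\]
Summing over $t$ turns the right-hand side into $\mu_i\,\E[n_i]$.

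Third, plugging in and multiplying by $b_i$ gives
\[
\E[\reward(\myUCB)] \;=\; \sum_{i=1}^k b_i\, \mu_i\, \E[n_i]
                   \;=\; \sum_{i=1}^k w_i\, \E[n_i] \;=\; \E[\vec{n}\cdot \vec{w}],
\]
which is the claim. The only subtlety, and hence the only ``obstacle,'' is the independence step; this is precisely why the per-round representation (rather than the stack representation) is the convenient one here, since in the stack realization $Y'_{i,t}$ is tied to the $t$-th \emph{selection} of arm $i$ and would require a stopping-time argument instead of this one-line independence.
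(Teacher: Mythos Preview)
Your proposal is correct and essentially identical to the paper's own proof: the paper introduces $X_{i,t}=\mathbb{1}[i_t=i]$, writes $\reward(\myUCB)=\sum_{i,t} b_i X_{i,t} Y_{i,t}$, uses the per-round realization to get $\E[X_{i,t}Y_{i,t}]=\mu_i\,\E[X_{i,t}]$ by independence, and sums to $\sum_i w_i\,\E[n_i]$. Your added remark about why the per-round representation is preferable to the stack representation here is apt and slightly more explanatory than the paper's version.
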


\begin{proof}
Let $X_{it}\in \{0,1\}$ be $1$ if and only if arm $i$ is selected in round $t$. Let $\{Y_{i,t}\}$ be the per-round realization. Since for each pair $(i,t)$ the random variables $X_{i,t}$ and $Y_{i,t}$ are mutually independent, it follows that
\[ \E[X_{i,t}\, Y_{i,t}] = \E[X_{i,t}]\, \E[Y_{i,t}] = \mu_i\, \E[X_{i,t}]. \]
Noting that
    $\reward(\myUCB) = \sum_{i,t} b_i\, X_{i,t} Y_{i,t}$,
we have
\begin{align*}
\E[\reward(\myUCB)]
    &= \textstyle \sum_{i,t} b_i\; \E[ X_{i,t} Y_{i,t} ] \\
    &= \textstyle \sum_{i,t} b_i\,\mu_i\; \E[ X_{i,t}] \\
    &= \textstyle \sum_i b_i\,\mu_i \; \E[\sum_t X_{i,t}] \\
    &= \textstyle \sum_i w_i\, \E[n_i]. \qedhere
\end{align*}
\end{proof}

Similarly, expected reward of the greedy benchmark is $\E[ \vec{n}\cdot \vec{w} ]$.

\begin{corollary}\label{cor:regreg}
$\regret(\myUCB) = \E[ (\vec{m}-\vec{n})\,\cdot\vec{w} ]$.
\end{corollary}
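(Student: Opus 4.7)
The plan is to imitate Claim~\ref{cl:reward-simple} verbatim for the greedy benchmark, and then combine the two expressions via linearity of expectation using the definition of greedy regret.

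First I would establish the companion identity $\E[\reward(\text{greedy})] = \E[\vec{m}\cdot\vec{w}]$. Let $X'_{i,t}\in\{0,1\}$ indicate that the greedy benchmark selects arm $i$ in round $t$, so $m_i=\sum_t X'_{i,t}$. The only property used in the proof of Claim~\ref{cl:reward-simple} is that $X_{i,t}$ is independent of the per-round realization $Y_{i,t}$; this is what allows factoring the expectation as $\E[X_{i,t}Y_{i,t}]=\mu_i\,\E[X_{i,t}]$. The same independence holds for the greedy benchmark: its decision in round $t$ is measurable with respect to the deterministic parameters $(b_i,B_i,\mu_i)$ and the clicks $\{Y_{j,s}:s<t\}$ in strictly earlier rounds, all of which are independent of $Y_{i,t}$. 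Hence the chain of equalities in Claim~\ref{cl:reward-simple} carries over verbatim with $X_{i,t}$ replaced by $X'_{i,t}$ and $n_i$ replaced by $m_i$, giving $\E[\reward(\text{greedy})]=\sum_i w_i\,\E[m_i]=\E[\vec{m}\cdot\vec{w}]$.

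Next I would invoke the definition of greedy regret, namely $\regret(\myUCB)=\E[\reward(\text{greedy})]-\E[\reward(\myUCB)]$, combine with the two identities above, and apply linearity of expectation:
\[
\regret(\myUCB) = \E[\vec{m}\cdot\vec{w}] - \E[\vec{n}\cdot\vec{w}] = \E[(\vec{m}-\vec{n})\cdot\vec{w}].
\]

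There is no real obstacle here; the corollary is essentially a one-line restatement. The only subtlety worth flagging explicitly is the independence check in the previous paragraph, which is needed to justify that Claim~\ref{cl:reward-simple}'s proof applies to the greedy benchmark and not just to \myUCB.
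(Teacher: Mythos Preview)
Your proposal is correct and matches the paper's approach exactly: the paper simply notes, right before the corollary, that ``Similarly, expected reward of the greedy benchmark is $\E[\vec{m}\cdot\vec{w}]$'' (the $\vec{n}$ printed there is a typo for $\vec{m}$) and leaves the subtraction implicit. Your explicit independence check for $X'_{i,t}$ and $Y_{i,t}$ is a welcome bit of care that the paper omits.
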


We pick the constant $C$ in~\eqref{eq:conf-rad-defn} so that~\eqref{eq:conf-rad-meaning} holds with
 \emph{really} high probability, so that the failure event when~\eqref{eq:conf-rad-meaning} does not hold can, essentially, be ignored in the analysis.%
\footnote{While $C=10$ suffices for the analysis, prior work on \UCB-style algorithms (e.g. in~\cite{RBA-icml08,ZoomingRBA-icml10}) suggests that a smaller value such as $C=1$ can be used in practice.}

\begin{claim}\label{cl:conf-rad}
With probability at least $1-\tfrac{1}{T}$, for each arm $i$ and each time $t$~\eqref{eq:conf-rad-meaning} holds.
\end{claim}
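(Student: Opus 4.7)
The plan is to reduce the claim to a standard Chernoff/Hoeffding bound on fixed-sample-size averages by working in the stack realization. Under $Y'$, for each arm $i$ the sequence $Y'_{i,1}, Y'_{i,2}, \ldots$ is an i.i.d.\ Bernoulli$(\mu_i)$ sequence, independently across arms, and whenever $n_i(t)\geq 1$ we have the clean identity $\nu_i(t) = \tfrac{1}{n_i(t)} \sum_{s=1}^{n_i(t)} Y'_{i,s}$. (When $n_i(t)=0$, the bound~\eqref{eq:conf-rad-meaning} is vacuous since $r_i(t) = C\sqrt{\log T} \geq 1$ and $|\mu_i - \nu_i(t)| \leq 1$ under any sensible convention.)

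First I would fix an arm $i$ and a \emph{deterministic} integer $n \in \{1,\ldots,T\}$, and apply Hoeffding's inequality to the i.i.d.\ sum $\tfrac{1}{n}\sum_{s=1}^n Y'_{i,s}$, which has mean $\mu_i$:
\[
\Pr\!\left[\, \Big|\tfrac{1}{n}\sum_{s=1}^n Y'_{i,s} - \mu_i\Big| > C\sqrt{\tfrac{\log T}{1+n}} \,\right]
\;\leq\; 2\exp\!\left(-\tfrac{2C^2\, n\, \log T}{1+n}\right)
\;\leq\; 2\, T^{-C^2},
\]
using $2n/(1+n) \geq 1$ for $n\geq 1$. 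Then I would take a union bound over the $k \leq T$ arms and the at most $T$ possible values of $n$, for a total failure probability of at most $2kT\cdot T^{-C^2} \leq 2 T^{2-C^2}$. Choosing any $C\geq 2$ (say $C=10$) drives this below $1/T$.

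Finally, I would argue that on the complementary (good) event, the bound $|\nu_i(t)-\mu_i|\leq r_i(t)$ holds simultaneously for \emph{every} arm $i$ and \emph{every} round $t$: given any realized trajectory of \myUCB, the quantity $\nu_i(t)$ is a function of the (random) value $n_i(t)$ and of the deterministic prefix $Y'_{i,1},\ldots,Y'_{i,n_i(t)}$, and we have already uniformly controlled every such prefix average.

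The only real obstacle is that $n_i(t)$ is a data-dependent random time, so one cannot apply Hoeffding directly to the sum $\sum_{s=1}^{n_i(t)} Y'_{i,s}$. The stack realization is precisely the device that circumvents this: it decouples the click values from the algorithm's sampling decisions, so controlling all $T$ deterministic prefix sums via a union bound is enough, and no martingale or optional-stopping machinery is needed.
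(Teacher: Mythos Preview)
Your proposal is correct and matches the paper's own proof sketch essentially line for line: work in the stack realization, apply Chernoff/Hoeffding to each fixed-length prefix sum $\sum_{s=1}^{n} Y'_{i,s}$, and then union-bound over all arms $i$ and all $n\leq T$. Your write-up is in fact more careful than the paper's sketch, since you explicitly handle the $n_i(t)=0$ case and spell out why controlling deterministic prefixes suffices despite $n_i(t)$ being a random time.
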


\begin{proof}[Proof Sketch]
Consider the stack realization  $(Y'_{i,t})$. For each arm $i$ and each time $t$, apply Chernoff Bounds to the sum $\sum_{s=1}^t Y'_{i,t}$ (which is the number of clicks in the first $t$ times that arm $i$ is selected). Then take the Union Bound over all $i$ and all $t$.
\end{proof}

In the rest of the proof we will assume without further notice that the event~\eqref{eq:conf-rad-meaning} holds for each arm $i$ and each time $t$. Essentially, we will argue deterministically from now on, whereas all ``probabilistic" reasoning is contained in Claim~\ref{cl:reward-simple} and Claim~\ref{cl:conf-rad}.

The following lemma says that each sub-optimal arm is not played too often. This is the crucial part of a UCB-style analysis, and it incorporates the main trick from the original analysis in~\cite{bandits-ucb1}.

\begin{lemma}\label{lm:UCB-trick}
Let $i^*_j$ be the best (lowest numbered) available arm at the last time when arm $j$ has been selected.
Then for each arm $j$ such that $j\neq i^*_j$ it holds that
\begin{align}\label{eq:UCB-trick}
 n_j \leq O(\log T)\; \left( \frac{b_j}{w(i^*_j)-w(j)}\right)^2 .
\end{align}
\end{lemma}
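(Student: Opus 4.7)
The plan is to run the standard UCB index-comparison argument from~\cite{bandits-ucb1}, adapted to track the per-click payment $b_j$. The key observation is that the \emph{last} round in which arm $j$ is selected pins down a moment at which the index of $j$ dominates the index of every arm available in that round, in particular $i^*_j$. Because we have already assumed (via Claim~\ref{cl:conf-rad}) that~\eqref{eq:conf-rad-meaning} holds for every arm at every time, the rest of the argument is deterministic.

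First I would fix the last round $t^*$ in which \myUCB selects arm $j$. By the rule of the algorithm and the definition of $i^*_j$, arm $i^*_j$ is available in round $t^*$ but loses the index comparison to $j$, so $I_j(t^*) \geq I_{i^*_j}(t^*)$. Expanding each index via~\eqref{eq:conf-rad-meaning} gives, on the one hand, the UCB lower bound
$$b_{i^*_j}\bigl(\nu_{i^*_j}(t^*) + r_{i^*_j}(t^*)\bigr) \;\geq\; b_{i^*_j}\,\mu_{i^*_j} \;=\; w_{i^*_j},$$
and on the other hand the ``width of the confidence interval'' upper bound
$$b_j\bigl(\nu_j(t^*) + r_j(t^*)\bigr) \;\leq\; w_j + 2\,b_j\,r_j(t^*).$$
Chaining these with $I_j(t^*) \geq I_{i^*_j}(t^*)$ yields $r_j(t^*) \geq (w_{i^*_j} - w_j)/(2\,b_j)$.

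To finish, I would substitute the definition~\eqref{eq:conf-rad-defn} of $r_j(t^*)$ and solve for $n_j(t^*)$, obtaining
$$1 + n_j(t^*) \;\leq\; \frac{4\,C^2\, b_j^2\, \log T}{(w_{i^*_j} - w_j)^2}.$$
Since $t^*$ is the last round in which arm $j$ is played, $n_j = n_j(t^*) + 1$, and~\eqref{eq:UCB-trick} follows with the constants absorbed into the $O(\log T)$ factor.

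The main ``obstacle'' here is really just bookkeeping rather than a new idea: one has to be careful that the index inequality and the two one-sided confidence bounds are all evaluated at the same round $t^*$, and that $i^*_j$ is genuinely an available alternative in that round (which is the very definition of $i^*_j$). The edge case $w_{i^*_j} \leq w_j$ makes the stated bound vacuous and thus requires no separate treatment — it merely reflects that no regret is incurred by preferring an arm whose per-impression value is at least that of the best available competitor.
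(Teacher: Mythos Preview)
Your proposal is correct and follows essentially the same route as the paper: fix the last round $t^*$ in which arm $j$ is played, use the index inequality $I_j(t^*)\geq I_{i^*_j}(t^*)$ together with the two one-sided confidence bounds from~\eqref{eq:conf-rad-meaning}, and solve the resulting inequality $w_{i^*_j}-w_j \leq 2\,b_j\,r_j(t^*)$ for $n_j$. Your bookkeeping with $n_j = n_j(t^*)+1$ and your remark about the vacuous case $w_{i^*_j}\leq w_j$ are both fine refinements of what the paper leaves implicit.
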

\begin{proof}
We will use the fact that by~\eqref{eq:conf-rad-meaning} for each arm $j$ and each arm $t$ it holds that
\[ w_j \leq  I_j(t) \leq w_j + 2\, b_j\, r_j(t). \]

Let $t$ be the last round when arm $i$ has been selected, and denote $i = i^*_j$. Since arm $i$ has been selected in round $t$, it must have had the highest index at the time. Therefore
\[ w_i \leq I_i(t) \leq I_j(t) \leq w_j + 2\,b_j\,r_j(t).\]
It follows that
    $w_i-w_j \leq 2\, b_j\,r_j(t) = O(b_j)\sqrt{\frac{\log T}{n_j}}$,
which implies the desired bound~\refeq{eq:UCB-trick}.
\end{proof}

From now on assume that \myUCB and the greedy benchmark are run on the same stack realization. Arguments in which two random processes are run on a joint probability distribution (\emph{coupled}) with the same marginal distributions for each process are known in Probability Theory as \emph{coupling arguments}.

We encapsulate the coupling argument in the following lemma. To state this lemma, recall that $\kB$ is the last (highest-numbered) arm exhausted by the greedy benchmark if such arm exists, and $0$ otherwise. Let $\kA$ be the best (lowest-numbered) arm that is \emph{not} exhausted by \myUCB.

\begin{lemma}\label{lm:analysis-coupling}
$(\vec{m}-\vec{n})\cdot \vec{w} \leq \sum_{j= \max(\kA,\kB)+1}^k\; n_j(w_{\kA} - w_j)$
    where $\kA\leq \kB+1$.
\end{lemma}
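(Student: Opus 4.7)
\medskip
\noindent\textbf{Proof plan for Lemma~\ref{lm:analysis-coupling}.}
The key enabling observation from the coupling is that, under a shared stack realization, for each arm $i$ there is a deterministic number $M_i$ of impressions needed to exhaust $i$'s budget; this $M_i$ is the same quantity for both algorithms. Moreover, each algorithm plays in every round unless \emph{all} arms are exhausted, so both use the same total number of impressions $N = \min(T,\sum_i M_i)$. In particular $\sum_i m_i = \sum_i n_i$, which is the identity I will use to rewrite the regret.

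The plan proceeds in two stages. First, I prove $\kA\leq \kB+1$ by a counting argument. Assume for contradiction $\kA\geq \kB+2$. By definition of $\kA$, the algorithm \myUCB has exhausted arms $1,\ldots,\kA-1$, so in particular $n_i=M_i$ for $i\leq \kB+1$. On the other hand, by definition of $\kB$, the greedy benchmark has \emph{not} exhausted $\kB+1$, so it must have used its entire time budget, giving $\sum_{i=1}^{\kB} M_i + m_{\kB+1} = T$ with $m_{\kB+1}<M_{\kB+1}$. Hence $\sum_{i=1}^{\kB+1} M_i > T$, while $\sum_i n_i\leq T$, contradicting $n_i=M_i$ for $i\leq \kB+1$. (The case $\kB=k$ is trivial: both algorithms exhaust every arm, so $\vec m=\vec n$, $\kA=k+1$, and both sides vanish.)

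Second, using $\sum_i(m_i-n_i)=0$, I rewrite
\[
(\vec m-\vec n)\cdot \vec w \;=\; \sum_{i=1}^k (m_i-n_i)(w_i-w_{\kA})
\]
and bound the contribution of each $i$ separately. For $i<\kA$, since $\kA-1\leq \kB$, both algorithms exhaust arm $i$, so $m_i=n_i=M_i$ and the term vanishes. For $\kA\leq i\leq \kB$, greedy exhausts $i$ so $m_i-n_i\geq 0$, while $w_i-w_{\kA}\leq 0$; the term is $\leq 0$ and can be dropped. For $i>\kB+1$ we have $m_i=0$, so the term equals $n_i(w_{\kA}-w_i)$, which matches the desired RHS. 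The only slightly subtle term is $i=\kB+1$: when $\kA=\kB+1$ the factor $w_{\kB+1}-w_{\kA}$ is zero, while when $\kA\leq \kB$ the term is $(m_{\kB+1}-n_{\kB+1})(w_{\kB+1}-w_{\kA})$, which is upper bounded by $n_{\kB+1}(w_{\kA}-w_{\kB+1})$ (using $m_{\kB+1}\geq 0$) — exactly the $j=\kB+1$ summand present in $\sum_{j=\max(\kA,\kB)+1}^k$ in that case. Summing gives the claimed inequality.

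The main obstacle I anticipate is precisely this edge term at $i=\kB+1$: the sign of $m_{\kB+1}-n_{\kB+1}$ is not controlled, so one cannot simply drop the term and must absorb it into the RHS. The fact that the RHS sum starts at $\max(\kA,\kB)+1$ (rather than, say, $\kA+1$) is exactly tailored to cover both the $\kA=\kB+1$ and $\kA\leq \kB$ cases uniformly.
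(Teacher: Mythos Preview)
Your proposal is correct; the coupling observation (a common exhaustion threshold $M_i$ under a shared stack realization), the identity $\sum_i m_i=\sum_i n_i$, the inequality $n_i\le m_i$ for $i\le \kB$, and the special treatment of the $i=\kB+1$ term are exactly the ingredients the paper uses.

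The organization, however, is genuinely different. The paper argues by a three-way case split: (1) $\kB=0$; (2) $\kB\ge 1$ and $n_j=m_j$ for all $j\le \kB$ (forcing $\kA=\kB+1$); (3) $\kB\ge 1$ and $n_j<m_j$ for some $j\le \kB$ (forcing $\kA\le \kB$). In each case it manipulates $(\vec m-\vec n)\cdot\vec w$ directly, using $\sum_i n_i=\sum_i m_i$ to regroup terms at the end. You instead isolate $\kA\le \kB+1$ up front via a clean counting contradiction, and then subtract the constant $w_{\kA}$ once using $\sum_i(m_i-n_i)=0$ to obtain $\sum_i (m_i-n_i)(w_i-w_{\kA})$, after which every index range is handled by a one-line sign check. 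Your route avoids the explicit case split and makes transparent why the lower limit $\max(\kA,\kB)+1$ is the right one: the $i=\kB+1$ summand is needed on the RHS precisely when $\kA\le \kB$ (so the sign of $m_{\kB+1}-n_{\kB+1}$ is uncontrolled), and can be dropped when $\kA=\kB+1$ (the factor $w_{\kB+1}-w_{\kA}$ vanishes). The paper's case analysis buys a slightly more explicit audit trail for each regime; your approach buys brevity and a uniform argument.
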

\begin{proof}
We consider three cases. The first case is when no arms are exhausted by the greedy benchmark. Then $\kB=0$, and the greedy benchmark played arm $1$ for $T$ rounds, so $m_1=T$ and $m_j=0$ for all $j\geq 2$. Therefore:
\[ (\vec{m}-\vec{n})\cdot \vec{w}
    = \textstyle (T-n_1)w_1 - \sum_{j=2}^k n_j\, w_j
    = \textstyle \sum_{j=2}^k n_j (w_1-w_j). \]
Moreover, since the greedy benchmark has not exhausted arm $1$, \myUCB has not exhausted it either, so $\kA=1$ and we are done.

For the other two cases let us assume that the greedy benchmark exhausts at least one arm (i.e., $\kB\geq 1$). We claim that for each arm $i\leq \kB$ it holds that $n_i\leq m_i$. Indeed, the greedy benchmark exhausts each arm $i\leq \kB$, and, since \myUCB and the greedy benchmark use the same stack realization, \myUCB would also exhaust arm $i$ after $n_i$ impressions, after which this arm would not be available. Claim proved.

The second case is that $\kB\geq 1$ and $n_j = m_j$ for each arm $j\leq \kB $. Then $\kA = \kB+1$. (Indeed, if \myUCB exhausted arm $\kB+1$ then the greedy benchmark would have also exhausted it, contradiction.) Let $i=\kA$ and note that $n_i\leq m_i$. It follows that
\begin{align*}
 (\vec{m}-\vec{n})\cdot \vec{w}
    &= \textstyle \sum_{j\geq i}\; (m_j - n_j) w_j \\
    &= \textstyle (m_i-n_i) w_i - \sum_{j\geq i+1}\; n_j\, w_j \\
    &= \textstyle \sum_{j\geq i+1}\; n_j\, (w_i-w_j).
\end{align*}

The remaining third case is that $\kB\geq 1$ and $n_j < m_j$ for some arm $j\leq \kB $. Then $\kA$ is the lowest-numbered such arm; in particular, $\kA\leq \kB$. Let $i=\kA$ and $\ell = \kB+1$. Note that we do not know whether $n_\ell\leq m_\ell$, and so we have to allow for the possibility that $n_\ell> m_\ell$. Then:
\begin{align*}
\textstyle \sum_{j\leq \kB}\; (m_j-n_j)w_j
    &\leq \textstyle m\,w_i \; \text{ where }
         m\triangleq \sum_{j\leq \kB}\; (m_j-n_j)w_j. \\
(\vec{m}-\vec{n})\cdot \vec{w}
    & \textstyle \leq m\, w_i - (n_\ell-m_\ell)w_\ell -  \sum_{j\geq \ell+1} n_j w_j \\
    & \textstyle = \sum_{j\geq \ell+1} n_j (w_i-w_j) + (n_\ell-m_\ell)(w_i-w_\ell) \\
    & \textstyle = \sum_{j\geq \ell} n_j (w_i-w_j).
\end{align*}
This completes the third case.

In all three cases we regroup the terms in the sums using the fact that
    $\sum_i n_i = \sum_i m_i = T$.
\end{proof}

Let $i=\max(\kA,\kB)$ and let
    $S = \{ j>i:\, w_{\kA}-w_j \geq \eps\}$.
Then
\[ \textstyle \sum_{j=i+1}^k\; n_j (w_{\kA}-w_j)
    \leq \eps T +  \sum_{j\in S}\; n_j (w_{\kA}-w_j). \]
By Lemma~\ref{lm:UCB-trick}, noting that $i^*_j \leq \kA$, we have for each $j>i$ that
\[ n_j \leq \frac{O(b_j^2\, \log T)}{(w(i^*_j)-w(j))^2}
    \leq \frac{O(b_j^2\, \log T)}{(w_{\kA}-w_j)^2}. \]
Putting it all together, we obtain the following:
\begin{align} \label{eq:analysis-best}
(\vec{m}-\vec{n})\cdot \vec{w}
    &\leq \eps T + \sum_{j\in S}\; \frac{O(b_j^2\,\log T)}{w_{\kA}-w_j}.
\end{align}
For Theorem~\ref{thm:main} we use a somewhat weaker corollary of~\eqref{eq:analysis-best} which gets rid of $\kA$.
\begin{align} \label{eq:analysis-output}
(\vec{m}-\vec{n})\cdot \vec{w}
    &\leq \eps T + \max_{i\in \{\kB,\,\kB+1\}}\; \sum_{j=i+1}^k\; \frac{O(b_j^2\,\log T)}{\max(\eps,w_i-w_j)}.
\end{align}
Using Corollary~\ref{cor:regreg} and taking expectations in both sides of \eqref{eq:analysis-output}, we obtain the desired regret bound~\refeq{eq:thm-main} in Theorem~\ref{thm:main}.

\section{Related work}
\label{sec:related-work}

MAB has been an active area of investigation since 1933~\cite{Thompson-1933}, in Operations Research, Economics and several branches of Computer Science: machine learning, theoretical computer science, AI, and algorithmic economics. A survey of prior work on MAB is beyond the scope of this paper; a reader is encouraged to refer to \cite{CesaBL-book,Bubeck-survey12} for background on prior-independent MAB, and to \cite{Sundaram-survey03,Gittins-book11} for background on Bayesian MAB. Starting from \cite{yahoo-bandits07}, much of the work on MAB has been motivated by internet advertising. Below we only discuss the work directly relevant to this paper.

The present paper continues the line of work on prior-independent MAB with stochastic rewards (where the reward of a given arm $i$ is an i.i.d. sample of some time-invariant distribution). The basic formulation for MAB with stochastic rewards is well-understood
(\cite{Lai-Robbins-85,bandits-ucb1} and the follow-up work, see \cite{Bubeck-survey12} for references and discussion).

\OMIT{
The ideas from the algorithm and analysis of \UCB have been tremendously useful in more general settings of MAB with stochastic rewards e.g.~\cite{DynamicMAB-colt08,sleeping-colt08,Munos-nips08,Bubeck-colt10,contextualMAB-colt11,Csaba-nips11,DynPricing-ec12}.}

Our formulation is a special case of \emph{sleeping bandits}~\cite{sleeping-colt08,contextualMAB-colt11} where in each round, a subset of arms is not available (``asleep") and the goal is to compete with the best available arm. Available arms for a given round are chosen by an adversary. However, this adversary in~\cite{sleeping-colt08,contextualMAB-colt11} is \emph{oblivious} (it decides its selections for all rounds before round $1$), whereas in our problem it is \emph{adaptive} (it decides its selection for round $t$ only after observing what happened before). This is a significant complication. To the best of our knowledge, the results in~\cite{sleeping-colt08,contextualMAB-colt11} do not extend to settings where available arms are chosen by an adaptive adversary.

Sleeping bandits are in turn a special case of \emph{contextual bandits}, where in each round an oblivious adversary provides a \emph{context} $x$ which determines which arms are available and, moreover, what are the expected payoffs in this round. The goal is to compete with the best (available) arm for a given context. Contextual bandits have been a subject of much recent work, see \cite{Bubeck-survey12} for a survey.

\OMIT{The connection between sleeping bandits and contextual bandits is made very explicit in~\cite{contextualMAB-colt11}.}

Several recent papers consider MAB problems with a single limited resource that is consumed by the arms. In such problems, each round yields a reward and a resource consumption, both of which may (stochastically) depend on the chosen arm. A typical example is ``dynamic selling"\cite{BZ09,DynPricing-ec12}, where a seller has a limited supply of items and offers one item for sale in each round; the arms correspond to the offered prices. Other examples include ``dynamic buying"~\cite{DynProcurement-ec12} (where a buyer has a limited budget of money and interacts with a new seller in each round), and several versions in which the resource consumption for a given arm is deterministic \cite{GuhaM-icalp09,GuptaKMR-focs11,TranThanh-aaai10,TranThanh-aaai12}. To the best of our knowledge, no published prior work has addressed MAB with multiple resources / budgets.

A very recent, yet unpublished, paper~\cite{BwK-full}, concurrent with respect to this paper, considers a generalization of our setting in which the budgets can be specified for arbitrary subsets of ads. They design new algorithms, based on techniques that are very different from ours. (Their algorithms and their analysis extend to a very general setting of MAB with arbitrary knapsack-style constraints, for which ad allocation is one of the application domains.) However, the guarantees in~\cite{BwK-full} for \problem are much weaker than ours. Essentially, they obtain regret
    $O(\sqrt{kT}\;(1+\sqrt{T/B}))$,
where $B$ is the smallest budget; this is not a very strong  guarantee if $B$ is small. Moreover, their analysis does not imply an ``optimistic" corollary similar to Corollary~\ref{cor:main}(b).

\xhdr{Ad allocation.} A large amount of work has addressed ad allocation in the internet settings. Most papers in this area do not consider the issue of uncertainty on the CTRs. Some of the prominent themes is online matching (of ads and webpages) and the design of \emph{ad auctions} (where the key issue is that the advertisers may strategically manipulate their bids if it benefits them). A more detailed discussion of this work is beyond the scope of this paper; see Chapter 28 of \cite{NRTV07} for background.

In the literature on ad auctions, most relevant to our work are the papers that address the strategic issues jointly with the issue of uncertainty on CTRs and/or advertisers' values-per-click (if these values change over time). There are two somewhat distinct directions:
    \emph{dynamic auctions}, in which the advertisers submit bids over time (see~\cite{DynAuctions-survey11} for a survey), and
    \emph{MAB mechanisms}~\cite{MechMAB-ec09,DevanurK09,Transform-ec10,Gatti-ec12},
where the advertisers submit bids only once, and the mechanism allocates ads over time.

\section*{Acknowledgements}

The author would like to thank Ashwin Badanidiyuru, Sebastien Bubeck and Robert Kleinberg for many stimulating conversations about multi-armed bandits.

% ############ BIBLIOGRAPHY ##############

\begin{small}
\bibliographystyle{abbrv}
\bibliography{bib-abbrv,bib-bandits,bib-slivkins,bib-AGT}

\begin{thebibliography}{10}

\bibitem{Csaba-nips11}
Y.~Abbasi-Yadkori, D.~P{\'a}l, and C.~Szepesv{\'a}ri.
\newblock Improved algorithms for linear stochastic bandits.
\newblock In {\em 25th Advances in Neural Information Processing Systems
  (NIPS)}, pages 2312--2320, 2011.

\bibitem{bandits-ucb1}
P.~Auer, N.~Cesa-Bianchi, and P.~Fischer.
\newblock Finite-time analysis of the multiarmed bandit problem.
\newblock {\em Machine Learning}, 47(2-3):235--256, 2002.
\newblock Preliminary version in {\em 15th ICML}, 1998.

\bibitem{bandits-exp3}
P.~Auer, N.~Cesa-Bianchi, Y.~Freund, and R.~E. Schapire.
\newblock The nonstochastic multiarmed bandit problem.
\newblock {\em SIAM J. Comput.}, 32(1):48--77, 2002.
\newblock Preliminary version in {\em 36th IEEE FOCS}, 1995.

\bibitem{DynPricing-ec12}
M.~Babaioff, S.~Dughmi, R.~Kleinberg, and A.~Slivkins.
\newblock Dynamic pricing with limited supply.
\newblock In {\em 13th ACM Conf. on Electronic Commerce (EC)}, 2012.

\bibitem{Transform-ec10}
M.~Babaioff, R.~Kleinberg, and A.~Slivkins.
\newblock Truthful mechanisms with implicit payment computation.
\newblock In {\em 11th ACM Conf. on Electronic Commerce (EC)}, pages 43--52,
  2010.

\bibitem{MechMAB-ec09}
M.~Babaioff, Y.~Sharma, and A.~Slivkins.
\newblock Characterizing truthful multi-armed bandit mechanisms.
\newblock In {\em 10th ACM Conf. on Electronic Commerce (EC)}, pages 79--88,
  2009.

\bibitem{DynProcurement-ec12}
A.~Badanidiyuru, R.~Kleinberg, and Y.~Singer.
\newblock Learning on a budget: posted price mechanisms for online procurement.
\newblock In {\em 13th ACM Conf. on Electronic Commerce (EC)}, pages 128--145,
  2012.

\bibitem{BwK-full}
A.~Badanidiyuru, R.~Kleinberg, and A.~Slivkins.
\newblock Bandits with knapsacks.
\newblock A technical report on {\tt arxiv.org}., May 2013.

\bibitem{DynAuctions-survey11}
D.~Bergemann and M.~Said.
\newblock Dynamic auctions: A survey.
\newblock In {\em Wiley Encyclopedia of Operations Research and Management
  Science}. John Wiley \& Sons, 2011.

\bibitem{BZ09}
O.~Besbes and A.~Zeevi.
\newblock Dynamic pricing without knowing the demand function: Risk bounds and
  near-optimal algorithms.
\newblock {\em Operations Research}, 57:1407--1420, 2009.

\bibitem{Bubeck-survey12}
S.~Bubeck and N.~Cesa-Bianchi.
\newblock {Regret Analysis of Stochastic and Nonstochastic Multi-armed Bandit
  Problems}.
\newblock {\em Foundations and Trends in Machine Learning}, 5(1):1--122, 2012.

\bibitem{Bubeck-colt10}
S.~Bubeck and R.~Munos.
\newblock {Open Loop Optimistic Planning}.
\newblock In {\em 23rd Conf. on Learning Theory (COLT)}, pages 477--489, 2010.

\bibitem{CesaBL-book}
N.~Cesa-Bianchi and G.~Lugosi.
\newblock {\em {Prediction, learning, and games}}.
\newblock Cambridge Univ. Press, 2006.

\bibitem{DevanurK09}
N.~Devanur and S.~M. Kakade.
\newblock The price of truthfulness for pay-per-click auctions.
\newblock In {\em 10th ACM Conf. on Electronic Commerce (EC)}, pages 99--106,
  2009.

\bibitem{Gatti-ec12}
N.~Gatti, A.~Lazaric, and F.~Trovo.
\newblock {A Truthful Learning Mechanism for Contextual Multi-Slot Sponsored
  Search Auctions with Externalities}.
\newblock In {\em 13th ACM Conf. on Electronic Commerce (EC)}, 2012.

\bibitem{Gittins-book11}
J.~Gittins, K.~Glazebrook, and R.~Weber.
\newblock {\em {Multi-Armed Bandit Allocation Indices}}.
\newblock John Wiley \& Sons, 2011.

\bibitem{GuhaM-icalp09}
S.~Guha and K.~Munagala.
\newblock Multi-armed bandits with metric switching costs.
\newblock In {\em Proc. 36th International Colloquium on Automata, Languages,
  and Programming (ICALP)}, pages 496--507, 2009.

\bibitem{GuptaKMR-focs11}
A.~Gupta, R.~Krishnaswamy, M.~Molinaro, and R.~Ravi.
\newblock Approximation algorithms for correlated knapsacks and non-martingale
  bandits.
\newblock In {\em 52nd IEEE Symp. on Foundations of Computer Science (FOCS)},
  pages 827--836, 2011.

\bibitem{sleeping-colt08}
R.~Kleinberg, A.~Niculescu-Mizil, and Y.~Sharma.
\newblock Regret bounds for sleeping experts and bandits.
\newblock In {\em 21st Conf. on Learning Theory (COLT)}, pages 425--436, 2008.

\bibitem{Lai-Robbins-85}
T.~L. Lai and H.~Robbins.
\newblock {Asymptotically efficient Adaptive Allocation Rules}.
\newblock {\em Advances in Applied Mathematics}, 6:4--22, 1985.

\bibitem{NRTV07}
N.~Nisan, T.~Roughgarden, E.~Tardos, and V.~V. (eds.).
\newblock {\em Algorithmic Game Theory}.
\newblock Cambridge University Press, 2007.

\bibitem{yahoo-bandits07}
S.~Pandey, D.~Agarwal, D.~Chakrabarti, and V.~Josifovski.
\newblock {Bandits for Taxonomies: A Model-based Approach}.
\newblock In {\em SIAM Intl. Conf. on Data Mining (SDM)}, 2007.

\bibitem{RBA-icml08}
F.~Radlinski, R.~Kleinberg, and T.~Joachims.
\newblock {Learning diverse rankings with multi-armed bandits}.
\newblock In {\em 25th Intl. Conf. on Machine Learning (ICML)}, pages 784--791,
  2008.

\bibitem{contextualMAB-colt11}
A.~Slivkins.
\newblock {Contextual Bandits with Similarity Information}.
\newblock In {\em 24th Conf. on Learning Theory (COLT)}, 2011.

\bibitem{ZoomingRBA-icml10}
A.~Slivkins, F.~Radlinski, and S.~Gollapudi.
\newblock {Learning optimally diverse rankings over large document
  collections}.
\newblock {\em J. of Machine Learning Research (JMLR)}, 14(Feb):399--436, 2013.
\newblock Preliminary version in {\em 27th ICML}, 2010.

\bibitem{Sundaram-survey03}
R.~K. Sundaram.
\newblock {Generalized Bandit Problems}.
\newblock In D.~Austen-Smith and J.~Duggan, editors, {\em Social Choice and
  Strategic Decisions: Essays in Honor of Jeffrey S. Banks (Studies in Choice
  and Welfare)}, pages 131--162. Springer, 2005.
\newblock First appeared as {\em Working Paper, Stern School of Business},
  2003.

\bibitem{Thompson-1933}
W.~R. Thompson.
\newblock {On the likelihood that one unknown probability exceeds another in
  view of the evidence of two samples.}
\newblock {\em Biometrika}, 25(3-4):285–294, 1933.

\bibitem{TranThanh-aaai10}
L.~Tran-Thanh, A.~Chapman, E.~M. de~Cote, A.~Rogers, and N.~R. Jennings.
\newblock $\epsilon$-first policies for budget-limited multi-armed bandits.
\newblock In {\em Proc. Twenty-Fourth AAAI Conference on Artificial
  Intelligence (AAAI-10)}, pages 1211--1216, 2010.

\bibitem{TranThanh-aaai12}
L.~Tran-Thanh, A.~Chapman, A.~Rogers, and N.~R. Jennings.
\newblock Knapsack based optimal policies for budget-limited multi-armed
  bandits.
\newblock In {\em Proc. Twenty-Sixth AAAI Conference on Artificial Intelligence
  (AAAI-12)}, pages 1134--1140, 2012.

\bibitem{Munos-nips08}
Y.~Wang, J.-Y. Audibert, and R.~Munos.
\newblock {Algorithms for Infinitely Many-Armed Bandits}.
\newblock In {\em Advances in Neural Information Processing Systems (NIPS)},
  pages 1729--1736, 2008.

\end{thebibliography}
\end{small}

\end{document}